\documentclass[double]{IEEEtran}
\usepackage{preamble}

\begin{document}

\title{Manifold Navigation Learning Using Recurrent Networks: Efficient Training Techniques}

\author{Sarah Harkins Dayton\textsuperscript{*,1}, Sathya Rengaswami\textsuperscript{*}, Vasileios Maroulas\textsuperscript{1}, David L. Boothe\textsuperscript{2}, Ioannis D. Schizas\textsuperscript{2}
\thanks{\textsuperscript{*}S. Dayton and S. Rengaswami are co-first authors and have equally contributed to this work. Author order determined alphabetically.}
\thanks{\textsuperscript{1}Authors are with the Department of Math, University of Tennessee. Emails: \texttt{sharkin5@vols.utk.edu, vmaroula@utk.edu}}
\thanks{\textsuperscript{2}Authors are with the U.S. Army Combat Capabilities Development Command, Army Research Laboratory.}
\thanks{The work in this manuscript was support by Collaborative Agreements W911NF-22-2-0097 and W911NF-22-2-0139.}
}



\maketitle

\begin{abstract}
Position tracking is a well-studied problem in Euclidean navigation, yet comparatively little attention has been given to navigation on general manifolds, where motion is governed by the underlying geometry of the space. In this work, we formalize the problem of manifold navigation as a velocity-driven tracking task and propose a coordinate-chart–based learning framework in which learning models operate entirely in a Euclidean representation. Predicted trajectories are mapped to the manifold via an immersion function prior to error computation, allowing geometric consistency to be evaluated without explicitly enforcing manifold constraints during learning. Within this framework, we focus on Gated Recurrent Units (GRUs) for sequential position prediction on curved manifolds after comparisons with alternative recurrent architectures. We evaluate our GRU models, trained with pertinent custom loss functions, against both a standard MSE loss function and a naive vector-addition baseline. We evaluate our approach on both analytically (and numerically) defined manifolds, as well as a surface reconstructed from a LiDAR point cloud. Our results show that recurrent models can learn to approximate manifold-constrained dynamics from velocity sequences and consistently outperform Euclidean baselines. Our results demonstrate that certain custom loss functions significantly improve long-horizon prediction accuracy compared to conventional point-wise loss functions. 

\end{abstract}

\begin{IEEEkeywords}
Navigation, Recurrent Learning Architectures, Manifolds, Coordinate Chart
\end{IEEEkeywords}

\section{Introduction}

The problem of estimating the position of an object from its prior state, determined by its last known position, velocity, direction, and elapsed time, is commonly referred to as dead reckoning and has long been fundamental to navigation \cite{bowditch2002american}. In scenaria where global navigation satellite system (GNSS) data is unavailable, such as during signal outages or in GNSS-denied environments, position estimation must rely entirely on inertial navigation systems \cite{groves2008principles}. These systems inherently perform dead reckoning, accumulating error over time due to sensor noise and drift \cite{UCAM-CL-TR-696}. To mitigate this divergence, Bayesian estimation frameworks are typically employed. The Kalman Filter and its non-linear variants, such as the Extended Kalman Filter,  as well as the Particle Filter, serve as the standard for fusing noisy inertial measurements with external observations via a prediction-correction cycle \cite{kalman_filter, titterton2004strapdown}. However, the traditional formulation of these filters assumes additive  noise within a Euclidean vector space, which is ill-suited for non-Euclidean domains. When motion occurs on curved manifolds, the relationship between position and velocity is governed by the underlying geometry of the space, where curvature introduces complexities that render linear Euclidean updates geometrically inconsistent \cite{Barfoot_2017}. 


Recent advances in deep learning have significantly expanded the capabilities of navigation systems across tasks such as localization, path planning, and autonomous control \cite{localization, s19051064, path_planning, golroudbari2023recent}. Architectures including convolutional neural networks, recurrent neural networks (RNNs), Long Short-Term Memory (LSTM) networks, and temporal convolutional neural networks have been employed to integrate information from heterogeneous sensors such as GPS, Doppler velocity logs, and inertial measurement units \cite{adiuku2024advancements, golroudbari2023recent, ma2021adaptive, cohen2024inertial}. In parallel, deep reinforcement learning has been widely used for decision-making and motion planning in complex environments, enabling agents to learn navigation strategies in both indoor and outdoor settings, including structured spaces like warehouses and unstructured terrains such as aerial or underwater environments \cite{zhao2024stable, golroudbari2023recent, adiuku2024advancements, raj2024intelligent, pico2025human}.

Despite these advances, existing methods primarily focus on improving predictive accuracy or control policies within Euclidean frameworks, often neglecting the intrinsic structure of the space in which navigation occurs. While many standard techniques typically represent position in 2D map projections or ambient 3D space and focus on obstacle avoidance \cite{zhao2024stable, golroudbari2023recent, adiuku2024advancements, raj2024intelligent, pico2025human, cohen2024inertial, path_planning},  it remains unclear whether such architectures can learn meaningful internal representations of environments with nontrivial geometry. This limitation is especially critical in applications involving uneven terrains, where motion is constrained to curved manifolds. To address this, our work shifts the modeling focus to the coordinate chart representation of the manifold. By formulating the model inputs and outputs directly within these local charts for an arbitrary surface, we enable the network to account for geometric constraints, such as curvature and metric distortions, that are otherwise lost in traditional Euclidean embeddings.


Our goal in this paper is to isolate the purely dynamical aspects of navigation and investigate whether a model can learn to estimate position on a manifold given only an initial position and a sequence of velocities. In Euclidean space, this problem is trivial and boils down to repeated vector addition. However, on a manifold, position updates are governed by the geometry of the space and require the use of the exponential map, which computes geodesic flow from a given point and velocity. This operation typically involves solving a system of ordinary differential equations, making it computationally expensive and impractical for real-time applications \cite{lee2018introduction}. 


To this end, we adopt a coordinate-based formulation in which both inputs and outputs of the model are expressed in a local coordinate chart. The coordinate chart enables the proposed learning model to operate entirely in a Euclidean space, without explicitly enforcing manifold constraints during training. Instead, predicted trajectories are mapped to the manifold via an appropriate immersion function prior to error computation. This allows prediction accuracy to be evaluated in the manifold space while keeping the learning problem itself in a simpler coordinate domain.


We utilize recurrent neural networks, focusing on gated recurrent networks (GRUs), on a manifold to predict future positions of a moving entity. In addition, we introduce custom loss (location error) aggregation schemes that weigh errors across the predicted trajectory, rather than uniformly penalizing all time steps. This enables us to study how different training objectives influence long-horizon prediction accuracy and generalization.

To evaluate these questions, we construct a novel dataset consisting of trajectories on both analytically and numerically defined manifolds, as well as surfaces reconstructed from point cloud data. This includes a LiDAR-derived reconstruction of San Augustin Peak \cite{san_augustin_data}, where the manifold structure must be inferred from discrete samples. Using this framework, we systematically investigate the extent to which recurrent models can learn geometry-consistent dynamics from data, and how training design choices impact their performance.


While several works have explored learning dynamical systems on manifolds, these approaches typically rely on continuous-time models or explicit geometric constraints \cite{10574334, KODITSCHEK1990412}. To the best of our knowledge, there has been no systematic study of discrete-time, velocity-driven trajectory prediction using recurrent models trained entirely in coordinate spaces, with the manifold structure introduced only through post-hoc immersion-based evaluation. The main contributions of this work are as follows:

\begin{enumerate}


\item We introduce a coordinate-based learning framework with immersion-based error evaluation, decoupling model training from explicit geometric computations and manifold surface dependencies.

\item We propose trajectory-level loss aggregation schemes that potentially improve  performance on long-horizon position prediction tasks.

\item We provide a systematic empirical study of recurrent models on manifold motion learning across diverse manifold settings.

\end{enumerate}

The paper is organized as follows. In Section \ref{sec:background} the problem statement is outlined, while background information is given for gated recurrent units and manifold navigation. Section \ref{sec:methodology} introduces the custom loss functions, systematic subsampling of indices, and the baseline model. Section \ref{sec:datasets} focuses on the generation and purpose of the datasets being used. Section \ref{sec:results} describes the numerical experiments, and discusses the results. Section \ref{sec:conclusion} provides conclusions and future work. 

%
\section{Problem Statement and Background}\label{sec:background}

On a Euclidean surface, the task of location prediction  turns out to be trivial and corresponds to adding the velocity vectors to determine the final location. However, this is not the case on a manifold surface and we next devise learning strategies to accomplish location prediction.

\subsection{Manifold Navigation}
Suppose a particle is located at a point $p \in \mathcal{M}$ and a tangent 'velocity' vector $v \in T_{p}\mathcal{M}$, where $T_{p}\mathcal{M}$ is the plane tangent to the manifold $\mathcal{M}$ at the point $p$, is applied for one unit of time. The particle follows a geodesic, the generalization of a straight line to curved space, traveling a distance $\|v\|$ to arrive at point $q \in \mathcal{M}$. This operation is defined by the exponential map:
\begin{equation}\label{eqn:exp_map}
\text{exp}_{p}: T_{p}\mathcal{M} \to \mathcal{M}, \quad q = \text{exp}_{p}(v).
\end{equation}
In Euclidean space, Equation \eqref{eqn:exp_map} reduces to simple vector addition, $\text{exp}_{p}(v) = p + v$, as the manifold and tangent space coincide. On a general $d$-dimensional manifold $\mathcal{M} \subset \mathbb{R}^{D}$ ($D$-dimensional ambient space), however, $p+v$ typically leaves the surface of the manifold. To correctly determine the future position, we utilize a local coordinate chart $(U, \varphi)$, where $U \subset \mathbb{R}^d$ is an open set and $\varphi: U \to \mathcal{M}$ is a smooth parameterization (embedding) of $U$ in $\mathbb{R}^D$ \cite{hauberg2019bayeslearnmanifoldon}. For any point $p \in \varphi(U) \subset \mathcal{M}$, we denote its coordinate representation as 
\begin{equation}\label{eqn:coord_vec_rep}
    x = \varphi^{-1}(p) \in U.
\end{equation}

Given a coordinate position $x \in U$ and a coordinate velocity $v \in \mathbb{R}^d$, the mapping is:
\begin{equation}\label{eqn:chart_exp_map}
y=\varphi^{-1} \left( \text{exp}_{\varphi(x)}(J_{\varphi}(x)\cdot v) \right),
\end{equation} 
where $J_{\varphi}(x): \mathbb{R}^d \to T_{\varphi(x)}\mathcal{M}$ is the Jacobian of the embedding map $\phi$, which pushes the chart velocity into the manifold's tangent space. In practice, $\text{exp}_{\phi (x)}(J_{\varphi}(x)\cdot v)$ or equivalently
$\text{exp}_x(v)$ is computed by solving the corresponding geodesic ODE system directly in $U$ \cite{Pennec_ReimannStats}. 

For a sequence of coordinate velocities $\{v_t\}_{t=1}^n$, we generate a trajectory in the chart via the iterative application of:
\begin{eqnarray}\label{eqn:iterative_coord_exp}
x_{t} &=& \varphi^{-1} \left( \text{exp}_{\varphi(x_{t-1})}(J_{\varphi}(x_{t-1})\cdot v_{t}) \right), \\
 v_{t}^{\text{END}} &=& \text{PT}_{x_{t-1} \to x_t}(v_t),
\end{eqnarray}
where $v_{t}^{\text{END}}$ represents the velocity $v_{t}$ parallel transported to the tangent space $T_{x_{t}}\mathcal{M}$. The velocity $v_{t}^{\text{END}}$ is retained for the dataset because we want to ensure that the velocity at time step $t$ is in the tangent space $T_{\varphi(x_t^{\text{chart}})}\mathcal{M}$, see Section \ref{subsec:trajectories}. Parallel transport is used to move a vector, in this case a velocity vector, from one tangent plane to another along a specific curve of the manifold \cite{lee2018introduction}. Thus, the vector remains tangent to the manifold. For a complete introduction to Riemannian geometry, refer to \cite{lee2018introduction}.

\subsection{Problem Statement}\label{sec:problem_stmt}

Let $p_{0} \in \mathcal{M}$ be an initial position and  $v_{1}, \dots, v_{n}$ be a sequence of velocities on a Riemannian manifold $(\mathcal{M}, g)$ and $g$ is a choice of Riemannian metric on $\mathcal{M}$ that is a symmetric bilinear form $g: T_{p}\mathcal{M} \times T_{p}\mathcal{M} \to \mathbb{R}$ for $p \in \mathcal{M}$, where each $v_{t}\in T_{p_{t-1}}\mathcal{M}$ with $T_{p_{t-1}}\mathcal{M}$ indicating the plane tangent to the manifold $\mathcal{M}$ at the point $p_{t-1}$. This sequence defines a ground-truth trajectory $\{ p_{t}\}^n_{t=1}$ via iterative application of the exponential map as described in Eq. \eqref{eqn:exp_map}.

To enable numerical learning, we represent this system in the coordinate chart $U$. Let $x_{0} = \varphi^{-1}(p_{0})\in U$ be the initial coordinate chart position.  We define a corresponding sequence of velocities $v^{chart}_{1}, \dots, v^{chart}_{n} \in \mathbb{R}^d$, which are related to the manifold tangent velocities via the Jacobian of the embedding function $\phi$, i.e.,:
\begin{equation}
    v_t = J_\varphi(x_{t-1}) \cdot v_t^{\text{chart}}.
\end{equation}

We seek to train the parameters $\mathbb{\theta}$ of a recurrent neural network model $f_{\theta}$ to predict the sequence of coordinate chart positions $x_{1}, \dots , x_{t}$ based on the initial chart position $x_{0}$ and the history of coordinate velocities, i.e.,
\begin{equation}\label{eqn:revised_prob_stmt}
    \hat{x}_{t} = f_{\theta}(x_{0}, [v^{chart}_{1}, \dots, v^{chart}_{t}]).
\end{equation}
The predicted positions on the manifold trajectory will be obtained by mapping the predicted coordinate chart positions back to the manifold via the embedding function, 
\begin{equation}
    \hat{p}_{t} = \varphi(\hat{x}_{t}),
\end{equation}
where $\varphi$ is the immersion function defined in Eq. \eqref{eqn:coord_vec_rep}. Idelaly, the recurrent learning architecture should be trained to minimize the accumulated geodesic error between the ground-truth positions $p_{t}$ and the predicted positions $\hat{p}_{t}$:
\begin{equation}
    E = \sqrt{\frac{1}{n}\sum_{t=1}^n d_{\mathcal{M}}(p_t, \hat{p}_t)^2},
\end{equation}
where $d_{\mathcal{M}}(p,q)$ denotes geodesic distance, that is, the length of the shortest path between points $p$ and $q$ as determined by the metric $g$ [cf. Apdx. \ref{appendix:rieman_submanifold}, for details on metric $g$].  However, given the manifold geomtric structure and metric are unknown we will resort to geodesic distance approximations as discussed later on.

\subsection{Gated Recurrent Units}
To address the coordinate chart position prediction task we will build upon Gated Recurrent Units (GRUs) learning architectures. GRUs are recurrent neural network (RNN) architectures designed to improve the modeling of sequential data while addressing the vanishing gradient problem that limits standard RNNs. Introduced in \cite{cho2014learningphraserepresentationsusing}, GRUs simplify the gating mechanisms of long short-term memory (LSTM) units, another RNN architecture, by combining the forget and input gates into a single update gate and introducing a reset gate, thus streamlining the cell structure without sacrificing the ability to capture long-term dependencies. 

At each time step a new input velocity vector $v_{t}$ enters the GRU structure, while the hidden state is updated according to a gating mechanism \cite{cho2014learningphraserepresentationsusing}. First, an update gate, $z_{t}$, determines how much of the past information is carried forward. The reset gate, $r_{t}$, controls the integration of new input with past memory, allowing the GRU to adaptively forget irrelevant information  \cite{cho2014learningphraserepresentationsusing, chung2014empiricalevaluationgatedrecurrent}. Next, a candidate hidden state, $\tilde{h}_{t}$, is computed. Lastly, the final hidden state, $h_{t}$, is computed. The recursive updating formulas are as follows:
\begin{align}
    &\text{Update Gate: }  z_{t} = \sigma_{z}(W_{z}v_{t} + U_{z}h_{t-1} + b_{z} ), \\
    &\text{Reset Gate: }  r_{t} = \sigma_{r}(W_{r}v_{t} + U_{r}h_{t-1} + b_{r} ),\\
    &\text{Hid. Cand.: } \tilde{h}_{t} = tanh(W_{h} v_{t} + U_{h}(r_{t} \odot h_{t-1}) + b_{h}),\\
    &\text{Hidden State: } h_{t} = (1-z_{t}) \odot h_{t-1} + z_{t} \odot \tilde{h}_{t},
\end{align} 
where $\sigma_{*}$ denotes a nonlinear activation function for the respective layer represented by $*$, $W_{*}$ and $U_{*}$ denote model weights, $b_{*}$ denotes bias vectors, and $tanh$ denotes the hyperbolic tangent nonlinear activation function.

GRUs have been shown to perform comparably to or better than LSTMs across a range of sequence modeling tasks, including speech recognition, machine translation, and time-series forecasting, particularly in settings where computational efficiency or limited data are important considerations \cite{chung2014empiricalevaluationgatedrecurrent, yin2017comparativestudycnnrnn}. In addition, GRU-based models have been successfully applied to trajectory prediction problems such as traffic flow estimation and flight path forecasting \cite{guan2024checkpoint, jiang2019trajectory, han2019realtime, han2021combined}, where the goal is to model the evolution of a system over time based on sequential observations.

These applications are closely related to navigation tasks, in which an agent’s state evolves according to underlying dynamics that may be partially observed or subject to noise. In such settings, GRUs provide a natural framework for learning temporal dependencies and propagating state information forward in time, enabling the prediction of future positions from past states and control inputs. This makes them particularly well-suited for predicting trajectories on manifolds, where the evolution of the system is constrained by nonlinear geometric structure and where accurate long-horizon prediction requires maintaining consistency with the underlying dynamics. Furthermore, we will show via extensive numerical comparisons that GRUs offer a performance edge over alternative recursive architectures especially in vehicle-like trajectory prediction.

\section{Methodology}\label{sec:methodology}



We employ a GRU network with a single hidden layer of dimension 32 [trying lower dimensionality degraded performance while increasing it did not provide any benefits] as the learning architecture to predict the position of a moving entity. The network takes as input a sequence of observed velocities on the coordinate chart. The initial position of the entity is used to initialize the GRU's hidden state. The model outputs the predicted position of the traveling entity on the coordinate chart. To improve training, custom loss functions are constructed that are discussed in Section \ref{sec:loss_fns}. 

\subsection{Loss Functions}\label{sec:loss_fns}
The error between a ground-truth trajectory $\{p_t\}_{t=1}^n \in \mathcal{M}$ and the corresponding predicted trajectory $\{\hat{p}_t\}_{t=1}^n \in \mathcal{M}$ can be quantified accurately by the mean geodesic distance:
\begin{equation}\label{eqn:std_loss_fn_dM}
    \mathcal{L}_n^{\mathcal{M}} = \frac{1}{n}\sum_{t=1}^n [d_{\mathcal{M}}(p_t,\hat{p}_t)]^2,
\end{equation}
where $d_{\mathcal{M}}(\cdot,\cdot)$ provides a rigorous geometric measure on the manifold surface ${\cal M}$. \textcolor{black}{However, finding the geodesic distance mandates integration and global optimization operations, as defined in Eq. \eqref{eqn:dist_metric} in Appendix \ref{appendix:rieman_submanifold}, which is computationally expensive to solve or approximate \cite{crane2020surveyalgorithmsgeodesicpaths}.} To address this challenge, Euclidean distance can serve as a good proxy for manifold distances on small enough scales \cite{Pennec_ReimannStats}. This is justified in a quantitative sense in the following theoretical result [proof in Apdx. \ref{appendix:proof}].

\begin{theorem}\label{thm:norm_equiv}
    Let $\M$ be a surface whose principal curvatures are bounded by $K > 0$ in absolute value. Then, there exist constants $r>0$ and $C>0$ such that for any two points $p,q \in \mathcal{M}$ with $d_\M(p,q)<r$,

\begin{equation}\label{eq:squeeze}
    \|p-q\|_2 \leq d_{\mathcal{M}}(p,q) \le C \|p-q\|_2.
\end{equation}
\end{theorem}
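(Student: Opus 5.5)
The left inequality is immediate: any piecewise smooth curve $\gamma:[0,1]\to\M$ joining $p$ to $q$ is also a curve in $\mathbb{R}^3$, so its length is at least $\|\gamma(1)-\gamma(0)\|_2 = \|p-q\|_2$. Taking the infimum over all such curves (cf. Eq. \eqref{eqn:dist_metric}) gives $\|p-q\|_2 \le d_\M(p,q)$.

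For the right inequality we compare the chord with the minimizing geodesic. Take $r$ smaller than the injectivity radius (assumed positive, e.g. $\M$ compact or with bounded geometry). Then for $d_\M(p,q)=L<r$ there is a unit-speed minimizing geodesic $\gamma:[0,L]\to\M$ with $\gamma(0)=p$, $\gamma(L)=q$. Since $\gamma$ is a geodesic of $\M$, its acceleration in $\mathbb{R}^3$ is purely normal and equals the second fundamental form $\mathrm{II}(\gamma',\gamma')$. Because the principal curvatures are bounded by $K$, we get $\|\gamma''(s)\|_2 \le K$ for all $s$. Hence $\|\gamma'(s)-\gamma'(0)\|_2 \le Ks$, and so
\begin{equation*}
\langle \gamma'(s),\gamma'(0)\rangle = 1-\tfrac12\|\gamma'(s)-\gamma'(0)\|_2^2 \ge 1-\tfrac12 K^2 s^2 .
\end{equation*}
Integrating this bound yields
\begin{equation*}
\|q-p\|_2 \ge \Big\langle \int_0^L \gamma'(s)\,ds,\ \gamma'(0)\Big\rangle \ge L - \tfrac{K^2L^3}{6}.
\end{equation*}
If we additionally require $r\le 1/K$, then $K^2L^2/6 \le 1/6$, so $\|p-q\|_2 \ge \tfrac56 L$. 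This gives the claim with $C=6/5$, or more generally $C=(1-K^2r^2/6)^{-1}$ for any $r<\sqrt6/K$.

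The main obstacle is the existence of the minimizing geodesic realizing $d_\M(p,q)$ within distance $r$, which depends on a positive lower bound for the injectivity radius or on completeness. The curvature bound alone does not control this for noncompact surfaces. I would therefore state explicitly that $\M$ is compact, or complete with injectivity radius bounded below, and take $r$ to be the minimum of that radius and $1/K$. If minimizers are unavailable, I would run the same estimate on near-minimizing geodesic curves. The key analytic input, $\|\gamma''\|\le K$ from the bound on the principal curvatures, is routine.
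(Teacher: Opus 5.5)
Your proof is correct and follows the paper's route: chord $\le$ arc length for the left inequality, and for the right a minimizing unit-speed geodesic whose ambient acceleration $\mathrm{II}(\gamma',\gamma')$ is bounded by $K$. The one difference is that projecting onto $\gamma'(0)$ gives you the sharper bound $\|p-q\|_2\ge L-K^2L^3/6$ (hence $C=6/5$), where the paper's first-order estimate $\ell-K\ell^2/2$ yields $C=2$.

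Your caveat about minimizers is justified. The paper assumes a minimizing geodesic tacitly, and the statement genuinely needs a global hypothesis: in the flat slit plane $\mathbb{R}^2\setminus\{(x,0):x\ge 0\}$ the points $(a,\pm\epsilon)$ have intrinsic distance $2\sqrt{a^2+\epsilon^2}$ but chord $2\epsilon$. Completeness alone suffices via Hopf--Rinow, since your estimate never uses uniqueness of the geodesic, so the injectivity-radius assumption is not needed. Your fallback to near-minimizing curves would not work, because such curves need not be geodesics and your bound $\|\gamma''\|\le K$ fails for them.
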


{\textbf{Remark}: Thm. 1 essentially states that as long as there exists $r>0$ s.t. $\|p_t - \hat{p}_t\|^2_2 < r$, where $p_t$ is the ground truth and $\hat{p}_t = \varphi(x_t)$, is the predicted position and the model is learning from the imposed training loss function (i.e. the training loss has a decreasing trend across epochs and stays below a threshold $r$), then the Euclidean distance is an appropriate proxy for the geodesic distance $d_{\mathcal{M}}(\cdot,\cdot)$. This is ensured by `squeezing' the manifold distance with an upper and lower bound that depends on the Euclidean distance. Thus, during training in which back propagation is pushing the Euclidean distance towards zero, the manifold distance inevitably should also be decreased.  Thm. 1 essentially indicates that minimization of a Euclidean norm based training loss ensures that also the geodesic distance $d_{\mathcal{M}}(\cdot,\cdot)$ will be decreasing as a direct consequence of Eq. \eqref{eq:squeeze} and the pinching theorem in \cite{rudin1976principles}.} Therefore, 
we utilize the Euclidean Mean Squared Error (MSE) as a computationally efficient surrogate, see Apdx. \ref{appendix:proof}:
\begin{equation}\label{eqn:std_loss_fn}
\mathcal{L}_n = \frac{1}{n}\sum_{t=1}^n \|p_t - \hat{p}_t\|^2_2.
\end{equation}

\subsection{Systematic Subsampling}
Towards learning scalability for trajectories of arbitrary length, we will explore  if generalizability can be achieved without penalizing the model at every timestep. We define a time index set $I=\{i_1, i_2, \ldots, i_m\} \subset \{1, \ldots, n\}$ and consider loss functions which are subsampled versions of \eqref{eqn:std_loss_fn} having the following general form:

\begin{equation}\label{eqn:custom_loss}
    \mathcal{L}_I = \frac{1}{|I|}\sum_{i \in I} \|p_i-\hat{p}_i\|^2_2.
\end{equation}
The main idea behind \eqref{eqn:custom_loss} is that by explicitly minimizing model prediction error on a sufficiently dense subset $I$, the error at the remaining indices is implicitly minimized. The strategies for defining systematically subsampled indices $I$ are given in Table \ref{tab:subsampling}. The explicit definitions are provided in Table \ref{tab:subsampling_explicit}, where $n_{max}$ is the maximum number of time steps in a trajectory that will be considered in the loss function. Note that the standard strategy refers to the training loss provided in \eqref{eqn:std_loss_fn} where all the time indices are taken into consideration. The exponential sampling strategy consists of indices that are initially dense and then become sparse as the time index value increases. The mixed linear spacing strategy provides indices that are dense early on and become sparse later in a linear fashion, unlike the exponential strategy. \textcolor{black}{In the mixed linear spacing strategy, the $p$ term represents the number of consecutive terms which are included in the loss function (starting at the first index), and the term $q$ represents the number of linearly interpolated terms which are included from index $p+1$ to $n_{max}$.} The strategies are chosen to explore whether it is more important to consider more samples early or later on in a trajectory when learning to determine position from manifold/chart velocities. 



\begin{table}[h]
\centering
\caption{Strategies for Index Selection}
\label{tab:subsampling}
\begin{tabular}{ccl}
\toprule
\textbf{Strategy} & \textbf{Notation} & \textbf{Formula for $j^{th}$ term} \\
\midrule
Standard & $\mathcal{L}_{k}$ & $j, \quad j \in [1, k]$ \\
\hline \\
Exponential & \texttt{exp\_k} & $\text{integer}(n_{\max}^{j/(k+1)})$ \\
\hline \\
\makecell{Mixed\\linear} & 
\makecell{\texttt{mixed\_p\_q} \\ {\footnotesize $\Delta = \left\lceil\frac{n_{\max}-p}{q}\right\rceil$}} & 
$\begin{cases} 
    j, & j \in [1, p] \\
    p + \Delta(j-p), & j \in (p, p+q]
\end{cases}$ \\
\bottomrule
\end{tabular}
\end{table}


\section{Datasets}\label{sec:datasets}
Next, details are provided on how the trajectories are generated for analytically described manifolds (either analytically or by numerically solving the ODEs involved), as well as reconstructed surfaces obtained from LiDAR point clouds. The datasets are generated to capture the geometric properties involved in manifold movement and they will be subsequently used to train the recurrent architectures in the task of manifold position prediction. \textcolor{black}{To the best of our knowledge, this is the first systematic study effort to generate such datasets for testing under the realm of  manifold navigation learning.}

\subsection{Manifold Trajectories Modeling}\label{subsec:trajectories}
We construct datasets comprising of trajectories on manifolds by iteratively sampling velocities and updating positions via the exponential map [cf. \eqref{eqn:exp_map}]. While trajectory generation is defined using geometric operations on the manifold $\mathcal{M}$, all data are ultimately represented in a coordinate chart. Specifically, each trajectory is given as a sequence $\{x_t\}_{t=0}^{n} \subset \mathbb{R}^d$, where $d$ is the dimension of the chart. This representation aligns with the learning framework, in which both inputs and outputs of the model are expressed in Euclidean chart coordinates.

We consider two classes of motion: Brownian motion and autoregressive processes of order one, i.e., AR(1). These models provide complementary motion regimes, challenging the learning process with unstructured motion patterns and temporally correlated dynamics that more closely match vehicle motion. Brownian trajectories correspond to random walks on the manifold with no preferred direction. To introduce temporal correlation in motion, which better models vehicle movement in practical scenaria \cite{jing2017}, we consider an AR(1)-type process on velocity generation.

 Given an initial point $x_0^{\text{chart}} \in \mathbb{R}^d$ corresponding to a coordinate representation of a point on manifold $\mathcal{M}$, and a parameter $v_{\text{scale}} > 0$, velocities and positions are generated at each time step. These velocities represent tangent vectors under the chosen coordinate parameterization. 
 
 The Brownian motion dataset is generated according to Alg. \ref{alg:brownian}, where velocities at each time step are stochastically independent. The AR(1) dataset is generated according to Alg. \ref{alg:ar1}, where the velocity at each time step depends on the previous one. The AR(1) model parameter utilized here is set to 
 $\theta = 1$.  

\begin{algorithm}
\caption{Brownian Manifold Trajectories Generation}\label{alg:brownian}
\begin{algorithmic}[1] 

\While{$x_{t}^{\text{proposed}} \notin  U$}
\State $\omega_{t} \sim  Unif(\{v \in \mathbb{R}^{d}: \|v\| \leq v_{scale} \})$
\State $v_t^{\text{chart}} = J_\varphi^{\dagger}(x_{t-1}^{\text{chart}}) Q_{t-1}\omega_{t}$
\State $x_{t}^{\text{proposed}} = x_{t-1}^{\text{chart}} + v_t^{\text{chart}}$
\EndWhile
\State $x^{\text{chart}}_{t}   = \text{exp}_{x_{t-1}^{\text{chart}}}( v_t^{\text{chart}})$ \label{eqn:exp_brownian}
\State $v_{t}^{\text{END}} = \text{PT}_{\varphi(x_{t-1}^{\text{chart}}) \to \varphi(x_t^{\text{chart}})}(v_t^{\text{chart}})$ \label{eqn:pt_brownian}
\end{algorithmic}
\end{algorithm}

\begin{algorithm}
\caption{AR(1) Manifold Trajectories Generation} \label{alg:ar1}
\begin{algorithmic}[1] 
\While{$x_{t}^{\text{proposed}} \notin  U$}
\State $\epsilon_{t} \sim  Unif(\{v \in \mathbb{R}^{d}: \|v\| \leq v_{scale} \})$
\State $v_t^{\text{chart}} = v^{ \text{END} }_{t-1}+ J_\varphi^{\dagger}(x_{t-1}^{\text{chart}}) Q_{t-1}\epsilon_{t}\label{eqn:add_noise_ar1}$
\State $x_{t}^{\text{proposed}} = x_{t-1}^{\text{chart}} + v_t^{\text{chart}}$
\EndWhile
\State $x^{\text{chart}}_{t}   = \text{exp}_{x_{t-1}^{\text{chart}}}( v_t^{\text{chart}})$ \label{eqn:exp_ar1}
\State $v_{t}^{\text{END}} = \text{PT}_{\varphi(x_{t-1}^{\text{chart}}) \to \varphi(x_t^{\text{chart}})}(v_t^{\text{chart}})$ \label{eqn:pt_ar1}
\end{algorithmic}
\end{algorithm}

In Algorithms \ref{alg:brownian} and \ref{alg:ar1}, $U$ represents the coordinate chart, $\varphi$ denotes the embedding function, $J_\varphi^{\dagger}(x_{t-1}^{chart})$ is the Moore-Penrose pseudoinverse of the Jacobian of the embedding function $\varphi$, $Q_{t-1}$ is the resulting matrix from a reduced $QR$-decomposition of Jacobian $J_\varphi(x_{t-1}^{chart})$, $\exp$ is the exponential map, and $\text{PT}$ represents parallel transport. Further, $\omega \sim Unif(\{v \in \mathbb{R}^{d}: \|v\| \leq v_{scale} \}$ indicates uniform random sampling of a point on a disc of radius $\|v_{scale}\|$.

The matrix $Q_{t-1}$ is utilized to put the sampled velocity $\omega_{t}$ on the plane tangent to manifold point $\varphi(x_{t-1}^{chart})$. \textcolor{black}{This step is especially important for the AR(1) trajectories since we are adding $v^{END}_{t-1}$ to the sampled noise term $\epsilon_{t}$ after being mapped to the coordinate chart}. We project the velocity back into the coordinate space via $J_\varphi^{\dagger}(x_{t-1}^{chart})$, yielding $v_{t}^{chart}$. The intermediate vector $x_{t}^{\text{proposed}}$ is computed and compared against the coordinate chart to restrict the size of the velocity on the coordinate chart at each time step. If $x_{t}^{\text{proposed}} \notin U$, then the vector $\omega_{t}$ or $\epsilon_{t}$ is resampled in the respective algorithm. The exponential map sends the tangent velocity vector $v_{t}^{chart}$ at the point $x_{t-1}^{chart}$ to a point on the coordinate chart that corresponds to a point on the manifold that follows the geodesic. Parallel transport moves the velocity vector from the tangent space $T_{\varphi(x_{t-1}^{\text{chart}})}\mathcal{M}$ to $T_{\varphi(x_t^{\text{chart}})}\mathcal{M}$. The velocity $v_{t}^{\text{END}}$ is retained for use in the dataset instead of $v_{t}^{chart}$ because we want the velocity associated with $x_{t}$ to be in \textcolor{black}{$T_{\varphi(x_t)}\mathcal{M}$}. This ensures that the velocity vector remains tangent to the manifold while incorporating temporal AR(1) correlations. In Alg. 1 $v_t^{END}$ is not utilized since the velocity are randomly and independently sampled at each time instant.

While the generative logic remains consistent across datasets for both the Brownian and AR(1) trajectories, in Eqns. \eqref{eqn:exp_brownian} and \eqref{eqn:pt_brownian} of both Algs. \ref{alg:brownian} and \ref{alg:ar1}, the exponential map is computed according to the specific geometry. On the Euclidean plane, the exponential map is simply vector addition. For the sphere, explicit equations are known for both the exponential map and parallel transport along its surface \cite{lee2018introduction}. For the torus and the reconstructed mountain surface, the geodesic equation initial value problem is approximately solved with a first-order ordinary differential equation numerical solver \textcolor{black}{utilizing function odeint from the torchdiffeq python library. This involves solving the Levi-Civita connection system of ODEs and numerically calculating the required Christoffel symbols \cite{lee2018introduction}}. In effect, this approximates $x_{t}^{chart}$ and $v_{t}^{END}$ for the torus and reconstructed mountain surface. Essentially Algs. 1 and 2 uniformly sample a random vector $\omega_t\in\mathbb{R}^d$ which then map in the coordinate space to evaluate the next position which can then be mapped into a point on the manifold surface generating a trajectory.

\subsection{Manifolds for Navigation}
We evaluate trajectory prediction for the two motion classes outlined earlier across manifolds of increasing geometric complexity. These manifolds are:

\begin{enumerate}
    \item \textbf{Euclidean plane:} A square domain $(0,\pi)\times(0,\pi) \subset \mathbb{R}^2$, serving as a baseline with zero curvature.
    
    \item \textbf{Sphere:} The lower (southern) hemisphere of a sphere of radius $\sqrt{\pi}$, denoted $\mathbb{S}^2_{\sqrt{\pi},-} =  \mathbb{S}^2 \cap \{ z \leq 0\}\subset \mathbb{R}^3$, where $\mathbb{S}^{2} = \{(x,y,z)\in \mathbb{R}^{3}: x^{2}+y^{2}+z^{2} = \pi \} $ which has constant positive curvature.
    
    \item \textbf{Torus:} A torus $\mathbb{T}^2$ with major radius $c=1$ and minor radius $a=\frac{1}{4}$, exhibiting spatially varying curvature, including regions of negative curvature.
    
    \item \textbf{Reconstructed surface:} A surface representing San Augustin Peak in New Mexico, reconstructed from the United States Geological Survey LiDAR point cloud data \cite{san_augustin_data}. The surface is modeled using a 2-layer fully connected multilayer perceptron (MLP) with a hidden dimension of size 32 for both layers and $tanh$ nonlinear activation functions. The MLP takes in the x- and y-coordinates of a position on the mountain surface and is trained to predict the z-coordinate (elevation). This MLP provides \textcolor{black}{a differentiable representation of the mountain surface} (elevation) with respect to the x- and y- coordinates which are further used to numerically evaluate geometric quantities such as exponential maps and parallel transport required to the trajectories generation. \textcolor{black}{The MLP utilized earlier will act as the embedding function for the reconstructed mountain dataset.} 
\end{enumerate}


\subsection{Dataset Construction}

For each of the aforementioned manifolds, we generate the Brownian motion dataset and the AR(1) dataset. Each dataset consists of $B=100,000$ trajectories evolving in a time horizon of length $n = 100$. The velocity scale is set to $v_{\text{scale}} = 0.1$ for the Brownian motion datasets, and $v_{\text{scale}} = 0.01$ for the AR(1) datasets. For model training and testing, an $80-20$ training-testing dataset split is utilized.  
All trajectories are represented in the coordinate chart where the learning takes place. Ground-truth positions are mapped to the manifold via the corresponding embedding function when computing evaluation metrics and prediction errors. 


\section{Results}\label{sec:results}
\subsection{Experimental Setup}

In the following experiments, we evaluate a GRU model [details in Apdx. \ref{appendix:model_choice}], with a hidden dimension of 32. We compare models trained with the proposed custom loss functions, $\text{exp}\_{10}$ and $\text{mixed}\_{7\_7}$, against a model trained with the standard loss function $\mathcal{L}_{50}$, as well as a baseline vector addition model defined in Eq. \eqref{eqn:baseline_model}. The loss functions rely on subsampled indices (see Table \ref{tab:subsampling_explicit}), which determine the evaluation points used during training. For each configuration, 10 independent training runs are performed, and reported performance metrics correspond to averages (as well as standard deviations) of the evaluation metrics across these runs. 

To compare the performance of the models, we will consider the mean squared error (MSE) of their predictions versus the ground truth. In Figs. \ref{fig:experiment1}, \ref{fig:missing_velo_v1}, and \ref{fig:missing_velo_v2}, we are reporting the MSE averaged across the $N=10$ independently trained models for each time step, namely 
\begin{equation}\label{eqn:timestep_MSE}
    \bar{e}_{t} = \frac{1}{N} \sum_{i=1}^{N} \| \varphi(\hat{p}_{i,t}) - \varphi(p_{i,t}) \|_{2}^{2}.
\end{equation}
In Tables \ref{tab:diff_setups} and \ref{tab:recon_surface}, we are reporting the MSE averaged across both the 10 independently trained models and all time steps in the trajectory to create a summary statistic, given by
\begin{equation}\label{eqn:summary_MSE}
    \bar{e} = \frac{1}{n\cdot N} \sum_{i=1,t=1}^{N,n} \| \varphi(\hat{p}_{i,t}) - \varphi(p_{i,t}) \|_{2}^{2},
\end{equation}
\textcolor{black}{where $n=100$ indicates the number of trajectory time steps.}
\begin{table}
    \centering
    \caption{The different loss functions and the corresponding indices aggregated.}
    \label{tab:subsampling_explicit}
    \begin{tabular}{ll}
\toprule
      Scheme & Indices \\
\midrule
     $\mathcal{L}_{50}$ & 1,\ldots, 50 \\
   exp\_10 & 1, 1, 2, 2, 4, 5, 8, 12, 17, 24, 35, 50 \\
mixed\_7\_7 & 1, 2, 3, 4, 5, 6, 7, 14, 20, 26, 32, 38, 44, 50 \\
\bottomrule
\end{tabular}
\label{tab:indices}
    
\end{table}

\subsection{Baseline Model}

On an arbitrary manifold, straight lines in the coordinate chart do not correspond to geodesics on the manifold. That is, in general
\begin{equation}\label{eqn:line_not_geodesics}
    \text{exp}_{p}(v) \neq p + v,
\end{equation}
unless the manifold is Euclidean.
However, vector addition in Eq. \eqref{eqn:line_not_geodesics} does provide a first-order approximation of the exponential map. We define a baseline model that performs simple vector addition in the ambient space: 
\begin{equation}\label{eqn:baseline_model}
    p_{t} = p_{0} + \sum_{i=1}^{t} v_{i}.
\end{equation}
This baseline model ignores the curvature of $\mathcal{M}$ and the path-dependency of parallel transport, serving as a reference point to evaluate geometry-aware neural network architectures. 

Note that the baseline model performs direct vector addition of velocity vectors without accounting for the underlying manifold structure. As a result, it achieves zero error on the plane but does not generalize to curved manifolds. Since this model is deterministic and non-trainable, it exhibits no variability across runs and therefore has no associated standard deviation in the subsequent numerical results.

    \subsection{Performance of Common Manifolds}
    We compare the performance of GRU models with a hidden dimension of 32 on Brownian motion and AR(1) datasets defined on the plane, sphere, and torus. Each model is trained using the standard loss function $\mathcal{L}_{50}$, as well as the two proposed custom loss functions: (1) $\text{exp}\_{10}$ and (2) $\text{mixed}\_{7}\_{7}$, which are  defined in Tables \ref{tab:subsampling} and \ref{tab:subsampling_explicit}. The average prediction error (MSE) across all trajectory time steps, is presented in Table \ref{tab:diff_setups}. \textcolor{black}{The average prediction error at each step of the trajectory, in mean squared error, is given in Fig. \ref{fig:experiment1}. }
    
    In Table \ref{tab:diff_setups}, we see that $\text{exp}\_{10}$  and $\text{mixed}\_{7}\_{7}$ are generally outperforming the standard loss function $\mathcal{L}_{50}$ on both AR(1) and Brownian motion trajectories on the plane, sphere, and torus. Further, $\text{exp}\_{10}$ and $\text{mixed}\_{7}\_{7}$ are outperforming the baseline model on the sphere and torus AR(1) datasets and torus Brownian motion dataset. In Fig. \ref{fig:experiment1}, we see that the baseline model is generally beaten on the Brownian motion and AR(1) datasets on the sphere and torus. The $\text{mixed}\_7\_7$ loss function generally produces the best performing models across all of the datasets. The results demonstrate the potential of the geometryc-aware learning models in learning the underlying mechanisms of predicting trajectory location.

\begin{table*}[htbp]
    \centering
    \caption{Prediction errors (MSE over all 100 timesteps) of GRUs trained with different loss functions. Values shown are averaged over 10 runs, with standard deviation in parentheses. Bold signifies the lowest MSE, while an asterisk next to the result indicates the lowest recurrent-based MSE.}
    \label{tab:diff_setups}
    
    \begin{tabular}{lcccccc}
        \toprule
        \multicolumn{1}{c}{\textbf{}}           & \multicolumn{2}{c}{\textbf{Plane}}    & \multicolumn{2}{c}{\textbf{Sphere}}   & \multicolumn{2}{c}{\textbf{Torus}}    \\ \cmidrule(lr){2-3} \cmidrule(lr){4-5} \cmidrule(lr){6-7}
        \multicolumn{1}{c}{\textbf{Loss Func.}} & \textbf{Brownian}    & \textbf{AR(1)}    & \textbf{Brownian} & \textbf{AR(1)}    & \textbf{Brownian} & \textbf{AR(1)}    \\ \hline
        Baseline                                & -- (--)      & -- (--)            & \textbf{0.00005 (--)}      & 0.00200 (--)      & 0.00387 (--)      & 0.00742 (--)      \\ \hline
        $\mathcal{L}_{50}$                      & 0.00118 (0.00109) & 0.00054 (0.00037) & 0.00007 (0.00002) & 0.00018 (0.00030) & 0.00161 (0.00106) & 0.00055 (0.00038) \\
        exp\_10                                 & 0.00030 (0.00048) & 0.00009 (0.00006) & 0.00009 (0.00012) & \textbf{0.00013 (0.00010)} & 0.00054 (0.00033) & \textbf{0.00025 (0.00015)} \\
        mixed\_7\_7                             & \textbf{0.00027 (0.00022)} & \textbf{0.00007 (0.00002)} & $0.00006 \,(0.00003)^{*}$ & 0.00013 (0.00012) & \textbf{0.00029 (0.00018)} & 0.00046 (0.00018)\\
        \bottomrule
    \end{tabular}
    \end{table*}

\begin{figure*}[htbp]
    \centering
    \includegraphics[scale=0.4]{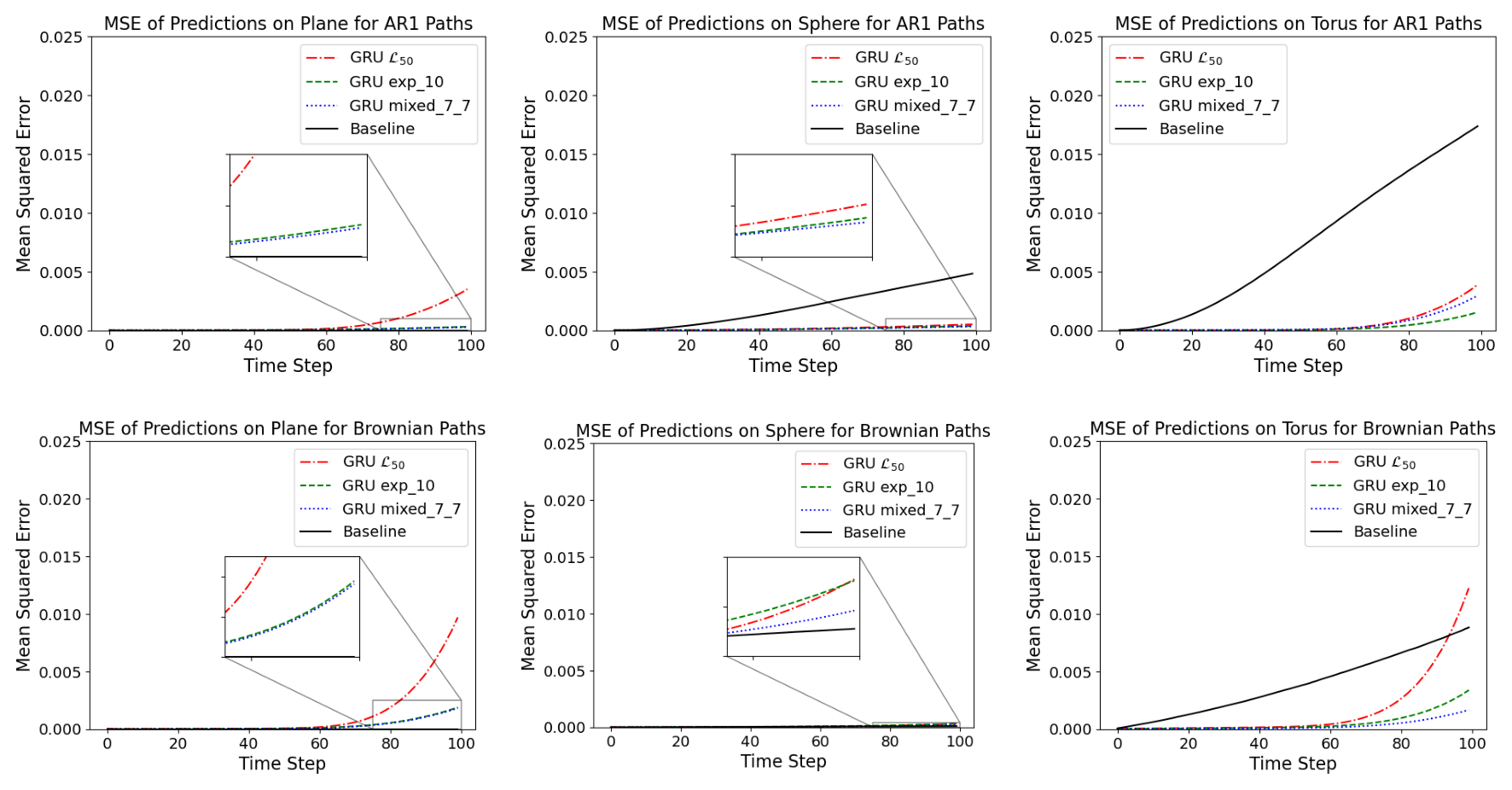}
    \caption{Mean squared error at each time step in the test trajectories for GRU models with different loss functions averaged across 10 independently trained models of each respective type against the vector addition baseline model.}
    \label{fig:experiment1}
\end{figure*}




    \subsection{Performance on the Reconstructed Mountain Surface}
    
    
    To demonstrate the learning advantages of the proposed architecture over the Euclidean baseline approach in more practical situations, we compare the performance of GRU models with a hidden dimension of 32 on Brownian motion and AR(1) datasets generated on the reconstructed mountain surface obtained from the LiDAR point clouds \cite{san_augustin_data}. Each model is trained using the standard loss function $\mathcal{L}_{50}$, and the two proposed custom loss functions: (1) $\text{exp}\_{10}$ and (2) $\text{mixed}\_{7}\_{7}$, as defined in Tables \ref{tab:subsampling} and \ref{tab:subsampling_explicit}. The average prediction error (MSE) across all trajectory time steps, is presented in Table \ref{tab:recon_surface}. In Table \ref{tab:recon_surface}, the $\text{mixed}\_7\_7$ custom loss function has superior performance on both the AR(1) and Brownian motion datasets on the reconstructed mountain surface. \textcolor{black}{Additioanlly, supplemental results are provided in Table \ref{tab:model_choice_supp_mixed77} in Apdx. \ref{appendix:model_choice} showing the comparison of RNN, LSTM, and GRU with the $\text{mixed}\_7\_7$ loss function on the reconstructed mountain surface showing the learning edge GRUs can provide over other recurrent architectures.}
    
\begin{table}[htbp]
    \centering
    \caption{Prediction MSE on the reconstructed surface}
    \label{tab:recon_surface}
    \begin{tabular}{lcc}
        \toprule

        \textbf{Setup}  & \textbf{Brownian} & \textbf{AR(1)}\\
        
        \midrule
        Baseline  & 0.01268 (NA) & 0.00225 (NA)\\
        \midrule
        $\mathcal{L}_{50}$  & 0.00291 (0.00112) & 0.00032 (0.00011)\\
        exp\_10  & 0.00276 (0.00102) & 0.00032 (0.00014)\\
        mixed\_7\_7  & \textbf{0.00271 (0.00095)} & \textbf{0.00031 (0.00011)}\\
        \bottomrule
    \end{tabular}
\end{table}
    
\subsection{Handling Missing Velocities}
In practical scenarios, velocity sensors in moving vehicles may fail intermittently due to outages or evasion in which case they are incapable of returning measurements.  We study the proposed learning architecture's ability to make predictions when there are velocities missing from the input sequence and propose remedies to improve the learning performance in the presence of missing data. 

    \subsubsection{Fixed Time-Step Velocity Dropout}
    We evaluate the model's ability to make predictions when a single velocity is missing from the input sequence. Specifically, the datasets are modified such that the velocity at the $10^{th}$ timestep is missing from every test trajectory and replaced with a zero. The model prediction MSE for  Brownian motion trajectories on the torus is shown in Fig. \ref{fig:missing_velo_v1}. From Fig. \ref{fig:missing_velo_v1}, we see that there is a notable spike in the  MSE of all considered models at the $10^{th}$ time step, as expected. Beyond the $10^{th}$ time step, the three GRU models are consistently outperforming the baseline model, and the custom loss functions, $\text{exp}\_{10}$ and $\text{mixed}\_{7}\_{7}$, have the best performance while offering a more 'graceful degradation'. Clearly, the proposed learning architecture exhibits more robustness compared to the baseline approach, while the proposed loss functions outperform the loss including all the first $50$ training trajectory terms. The performance advantage is more pronounced in the reconstructed surface, showing the potential of the learning architecture even in situations where the manifold does not have an analytical characterization.


\begin{figure*}[htbp]
    \centering
    \begin{subfigure}[b]{0.4\textwidth}
        \includegraphics[width=\textwidth]{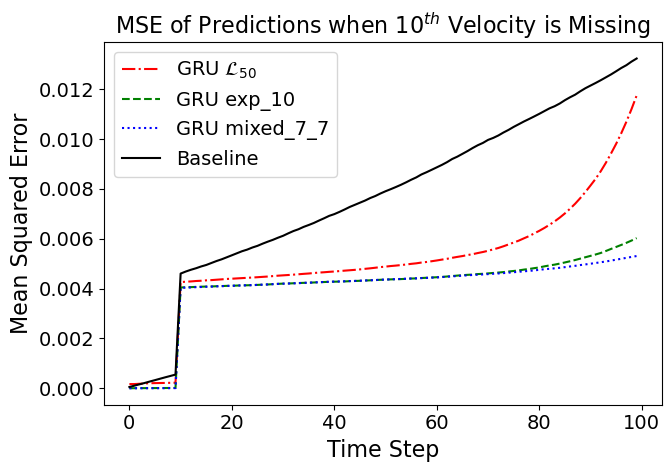}
        \caption{Torus.}
        \label{fig:missing_velo_v1_torus}
    \end{subfigure}
    \begin{subfigure}[b]{0.4\textwidth}
        \includegraphics[width=\textwidth]{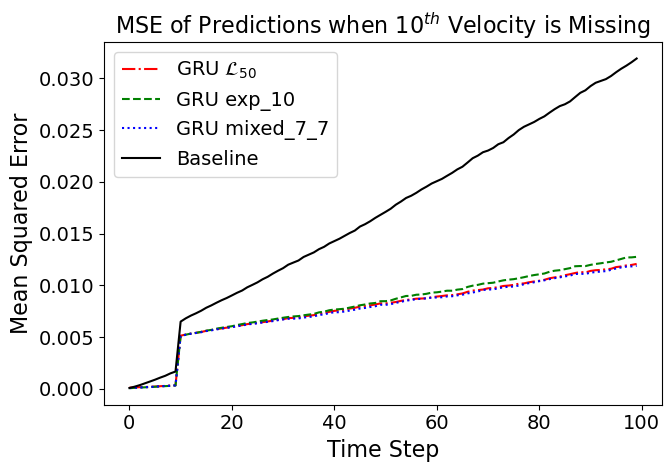}
        \caption{Reconstructed surface.}
        \label{fig:missing_velo_v1_reconsurf}
    \end{subfigure}
    \caption{Predicton MSE for Brownian motion trajectories on the torus and the reconstructed surface with the $10^{th}$ time step of the velocity sequence missing.}
    \label{fig:missing_velo_v1}
\end{figure*}

    \subsubsection{Random Velocity Masking}
  Building on the previous experimental scenario,  we evaluate the model under more severe missing-data conditions by randomly masking a proportion of velocity observations via the utilization of a Bernoulli process. We consider three missing velocity proportions: 1\%, 5\%, and 10\%. The model prediction MSE for  Brownian motion trajectories on the torus and the reconstructed surface are shown in Figure \ref{fig:missing_velo_v2}. From Fig. \ref{fig:missing_velo_v2}, we see clearly that the GRU models are consistently beating the baseline model. The custom loss functions, $\text{exp}\_{10}$ and $\text{mixed}\_{7}\_{7}$, show an advantage over the standard loss function, $\mathcal{L}_{50}$ after approximately the $75^{th}$ time step showing that the custom loss functions provide better long term predictions in adverse data environments involving missing data. It is worth emphasizing that the proposed learning architectures experience a more graceful degradation compared to the baseline as the portion of missing velocities increases, making it an appealing option in situations where robustness is required. Moreover the custom loss functions exhibit a more linear behavior with respect to ${\cal L}_{50}$ which keeps the MSE performance consistently below the baseline MSE.


\begin{figure*}[h]
    \centering
    \begin{subfigure}[b]{0.85\textwidth}
        \centering
        \includegraphics[width=\textwidth]{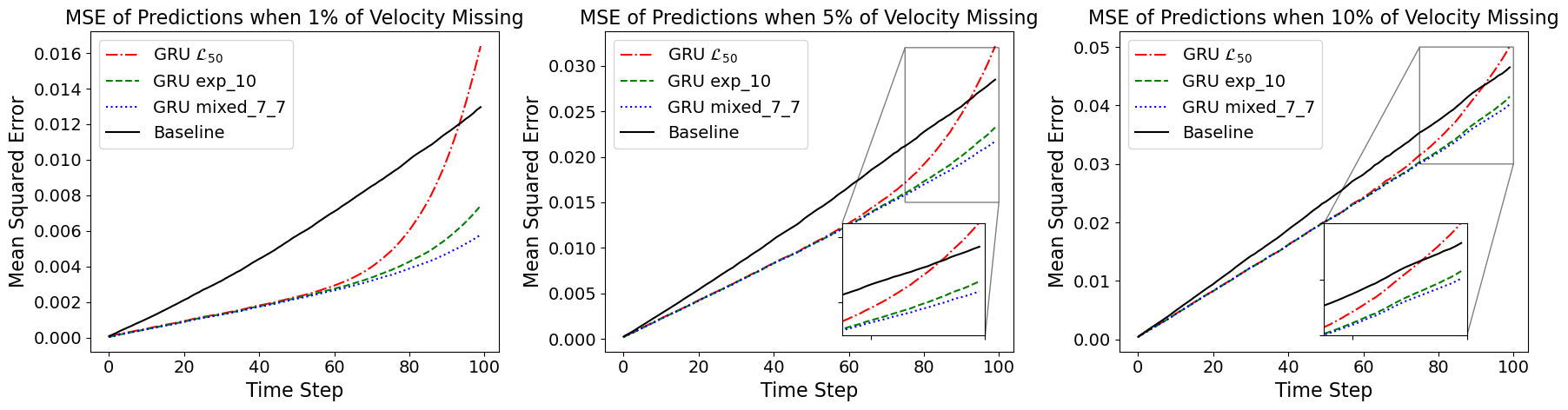}
        \caption{Torus.}
        \label{fig:missing_velo_v2_torus}
    \end{subfigure}
    
    \begin{subfigure}[b]{0.85\textwidth}
        \centering
        \includegraphics[width=\textwidth]{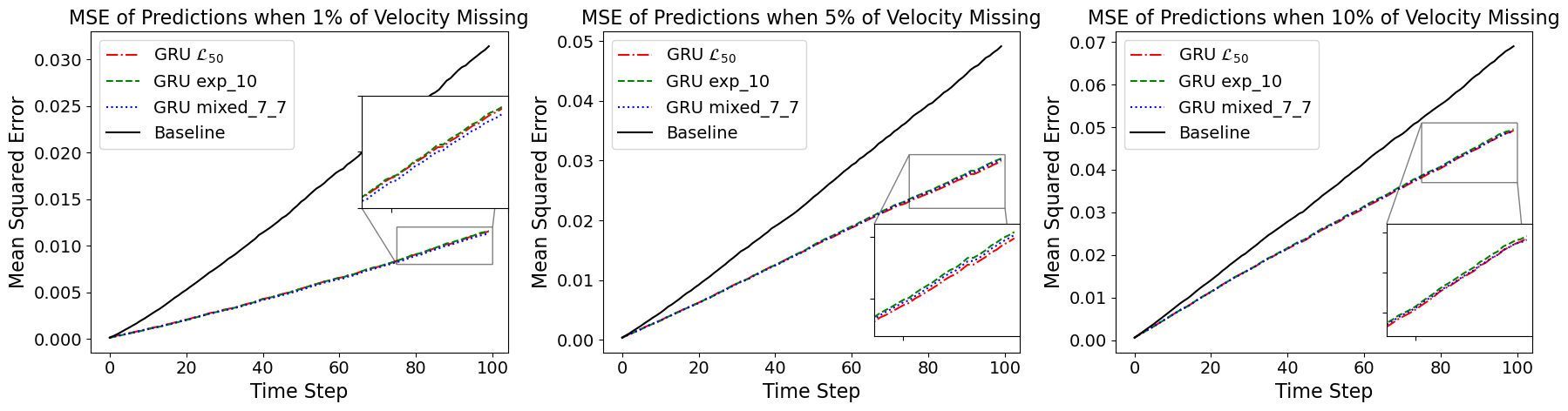}
        \caption{Reconstructed surface.}
        \label{fig:missing_velo_v2_reconsurf}
    \end{subfigure}
    \caption{Mean squared error of predictions for Brownian motion trajectories on the torus and the reconstructed surface for different missing velocity portions.}
    \label{fig:missing_velo_v2}
\end{figure*}
\section{Conclusions and Future Work}\label{sec:conclusion}

In this paper, we proposed a GRU-based learning architecture operating on coordinate charts that can effectively perform navigation and position prediction on nonlinear manifolds whose geometric structure is not necessarily known. Across curved manifolds, including the sphere, torus, and a reconstructed mountain surface, GRU-based approaches consistently outperform a naive vector-addition baseline that ignores the underlying surface geometry. Furthermore, we  devised pertinent loss functions, i.e., $\text{exp}\_{10}$ and $\text{mixed}\_{7\_7}$, to strategically sample the training trajectories and effectively train on manifold trajectories yielding improved performance over the standard loss function $\mathcal{L}_{50}$, particularly in regimes where geometric consistency over time is critical.

Importantly, these advantages become more pronounced under adverse data conditions where data is missing from the velocity observations, where the proposed loss functions enable more stable and accurate long-horizon predictions compared to both the baseline model and GRU models trained with standard loss functions that take into account all training trajectory positions. This suggests that incorporating structure-aware loss formulations plays a key role in mitigating error accumulation in sequential prediction on manifolds.

Our findings further advocate that the proposed approach is well-suited for applications requiring reliable long-term trajectory estimation in partially observed or degraded sensing environments. In scenarios involving signal denial or radar evasion, where measurements may be sparse, intermittent, or intentionally disrupted, the ability to maintain accurate predictions over extended time horizons is essential. The demonstrated robustness of the proposed loss functions highlights their potential for such settings.

Future work will explore extending these methods to higher-dimensional and more complex manifolds, as well as incorporating uncertainty-aware modeling.


\appendices

\section{Comparing Recurrent Architectures}\label{appendix:model_choice}
In Table \ref{tab:model_choice} we compare, across $10$ independent trials and across all time steps, the prediction MSE of the vector addition baseline model against the MSE attained by a gated recurrent unit (GRU), a long short-term memory (LSTM) unit, and an Elman recurrent neural network (RNN) when trained with different versions of the standard loss function, $\mathcal{L}_{10}$, $\mathcal{L}_{30}$, and $\mathcal{L}_{50}$, where the number of added position terms changes and indicated by the loss subscript. Each of the GRU, LSTM, and RNN learning models has a hidden dimension of 32. Each model is trained across 10 independent trials.  Table \ref{tab:model_choice} shows that the $\mathcal{L}_{50}$ version of the standard loss function notably outperforms $\mathcal{L}_{10}$ and $\mathcal{L}_{30}$. Thus, for the experiments in Section \ref{sec:results} we utilize the $\mathcal{L}_{50}$ version of the standard loss function. 

In Tables \ref{tab:model_choice_supp} and \ref{tab:model_choice_supp_mixed77}, we compare the prediction MSE of the three different recurrent architectures on the reconstructed sufrace trajectories for the loss functions   $\mathcal{L}_{50}$ and 
$\text{mixed}\_{7\_7}$, respectively. When the $\mathcal{L}_{50}$, LSTMs achieve  lower MSE for Brownia motion, whereas GRUs have an edge for AR(1) trajectories. It can also be seen that the GRU produces a lower average prediction error than the LSTM and RNN on the reconstructed surface when using the $\text{mixed}\_{7\_7}$ loss function in training. Additionally, since the gating mechanism in the GRU model is computationally simpler than that of the LSTM \cite{dey2017gatevariantsgatedrecurrentunit}, moving forward, we will utilize the GRU model.

\begin{table*}[]
    \centering
    \caption{Prediction error (MSE) for GRU, LSTM, and RNN with different versions of the standard loss function. The mean values and standard deviation (in parentheses) of the localization MSE are displayed.}
    \label{tab:model_choice}
\footnotesize
\begin{tabular}{cccccccc}
\hline
                                    & \multicolumn{1}{l}{}               & \multicolumn{2}{c}{\textbf{Plane}}    & \multicolumn{2}{c}{\textbf{Sphere}}   & \multicolumn{2}{c}{\textbf{Torus}}    \\ \cline{3-8} 
\textbf{\begin{tabular}[c]{@{}c@{}}Loss \\ Func.\end{tabular}}                 & \multicolumn{1}{l}{\textbf{Model}} & \textbf{Brownian} & \textbf{AR(1)}    & \textbf{Brownian} & \textbf{AR(1)}    & \textbf{Brownian} & \textbf{AR(1)}    \\ \hline
                                    & Baseline                           & -- (--)      & -- (--)      & \textbf{0.00005 (--)}      & 0.00200 (--)      & 0.00387 (--)      & 0.00742 (--)      \\ \hline
\multirow{3}{*}{$\mathcal{L}_{10}$} & GRU                                & 0.10376 (0.09999) & 0.28024 (0.08625) & 0.03291 (0.00406) & 0.02627 (0.01133) & 0.11175 (0.02301) & 0.09435 (0.01313) \\
                                    & LSTM                               & 0.54111 (0.32620) & 0.06560 (0.02796) & 0.00936 (0.00489) & 0.01819 (0.00631) & 0.14782 (0.05111) & 0.51796 (0.13962) \\
                                    & RNN                                & 2.26389 (6.02657) & 0.00985 (0.02976) & 0.00150 (0.00171) & 0.00811 (0.00604) & 0.08622 (0.16793) & 0.10722 (0.27902) \\ \hline
\multirow{3}{*}{$\mathcal{L}_{30}$} & GRU                                & 0.00574 (0.00411) & 0.00608 (0.00385) & 0.00149 (0.00041) & 0.00019 (0.00017) & 0.00334 (0.00155) & 0.01392 (0.00225) \\
                                    & LSTM                               & 0.00146 (0.00158) & 0.00052 (0.00111) & 0.00086 (0.00061) & 0.00058 (0.00043) & 0.00521 (0.00111) & 0.02413 (0.05700) \\
                                    & RNN                                & 0.00081 (0.00090) & 0.00154 (0.00155) & 0.00841 (0.01158) & 0.00664 (0.00675) & 0.55926 (0.77857) & 0.56647 (0.63164) \\ \hline
\multirow{3}{*}{$\mathcal{L}_{50}$} & GRU                                & 0.00118 (0.00109) & 0.00054 (0.00037) & $0.00007 \, (0.00002)^{*}$ & \textbf{0.00018 (0.00030)} & 0.00161 (0.00106) & \textbf{0.00055 (0.00038)} \\
                                    & LSTM                               & \textbf{0.00030 (0.00021)} & \textbf{0.00016 (0.00011)} & 0.00205 (0.00389) & 0.00058 (0.00047) & \textbf{0.00070 (0.00055)} & 0.00229 (0.00218) \\ 
                                    & RNN                               & 0.16437 (0.51187) & 0.22246 (0.39899) & 0.00699 (0.01420) & 0.00587 (0.00563) & 0.21869 (0.63981) & 0.59643 (0.65239)  \\ \hline
\end{tabular}
\end{table*}

\begin{table}[htbp]
    \centering
    \caption{Prediction error (MSE) of recurrent models on the reconstructed surface with the $\mathcal{L}_{50}$ loss function.}
    \label{tab:model_choice_supp}
    \begin{tabular}{lcc}
        \toprule

        \textbf{Setup}  & \textbf{Brownian} & \textbf{AR(1)}\\
        
        \midrule
        Baseline  & 0.01268 (NA) & 0.00225 (NA)\\
        \midrule
        GRU & 0.00291 (0.00112) & \textbf{0.00032 (0.00011)} \\
        LSTM  & \textbf{0.00264 (0.00094)} & 0.00036 (0.00010)\\
        RNN  & 0.01611 (0.02645) & 	0.00112 (0.00034)\\
        \bottomrule
    \end{tabular}
\end{table}

\begin{table}[htbp]
    \centering
    \caption{Prediction error (MSE) of recurrent models on reconstructed surface with the $\text{mixed}\_{7\_7}$ loss function.}
    \label{tab:model_choice_supp_mixed77}
    \begin{tabular}{lcc}
        \toprule

        \textbf{Setup}  & \textbf{Brownian} & \textbf{AR(1)}\\
        
        \midrule
        Baseline  & 0.01268 (NA) & 0.00225 (NA)\\
        \midrule
        GRU & \textbf{0.00271 (0.00095)} & \textbf{0.00031 (0.00011)}  \\
        LSTM  & 0.00295 (0.00098) & 0.00036 (0.00011)\\
        RNN  & 0.06543 (0.07541) & 	0.01220 (0.02937)\\
        \bottomrule
    \end{tabular}
\end{table}

\section{Manifold Distance}\label{appendix:rieman_submanifold}

Consider a submanifold $\mathcal{M}^{m} \subset \mathbb{R}^n$, $m\leq n$, parameterized by a smooth embedding $\varphi:U \to \mathbb{R}^{n}$ where $U$ is an open set. We call $\varphi$ the embedding function and $U$ the coordinate chart. Suppose that the Jacobian of the embedding function, $J_{\varphi}(x) \in \mathbb{R}^{n\times m}$, is full rank, that is, $rank(J_{\varphi}(x)) = m$ for all $x \in U$. Then, for any point $x \in U $, with $\varphi(x) = p$, an inner product on the tangent space $T_{x}U$ is defined by 
\begin{equation}\label{eqn:induced_metric}
    g_{x}(u,v) = \langle J_{\varphi}(x)u, J_{\varphi}(x)v\rangle_{\mathbb{R}^{n}}, \quad u,v \in T_{x}U
\end{equation}
where the right-hand side is the Euclidean inner product in $\mathbb{R}^{n}$. The collection $g = \{g_{x}: x\in U \}$ is the pullback metric on $U$ \cite{lee2018introduction}. The length of a path $\gamma : [0,1] \to U$ under metric $g$ is 
\begin{equation}
    L(\gamma) \coloneqq \int_0^1 \sqrt{g_{\gamma(t)}\!\left(\gamma'(t),\, \gamma'(t)\right)}\, dt,
\end{equation}
where $\gamma \in C^{1}$ (continuously differentiable) and $\gamma '(t)$ is the derivative with respect to $t$ and represents the velocity of the parameterized curve at time $t$. Using the definition of the inner product on the tangent space $T_{x}U$, from Equation \eqref{eqn:induced_metric},
\begin{align}
    g_{\gamma(t)}(\gamma ' (t), \gamma ' (t)) &= \langle J_{\varphi}(\gamma(t))\gamma'(t), J_{\varphi}(\gamma(t))\gamma'(t) \rangle_{\mathbb{R}_{n}} \\
     &= \| J_{\varphi} (\gamma(t))\gamma ' (t) \|_{2}^{2}.
\end{align}
Using the chain rule, $(\varphi \circ \gamma)'(t) = d\varphi_{\gamma (t)}(\gamma'(t)) = J_{\varphi}(\gamma(t))\gamma'(t)$, where $d\varphi_{\gamma(t)}$ denotes the differential and its matrix representation in coordinates is the Jacobian.
Thus,
\begin{equation}
    L(\gamma) = \int_0^1 \left\|(\varphi \circ \gamma)'(t)\right\|_2 dt,
\end{equation}
where the second equality follows from the definition of the pullback metric and shows that $L(\gamma)$ coincides with the Euclidean arc length of the embedded curve $\varphi \circ \gamma$ whose image lies in $\mathcal{M}$.

The distance between two points $p,q \in \mathcal{M}$ is defined as 
\begin{equation} \label{eqn:dist_metric}
    d_{\mathcal{M}}(p, q) \coloneqq \inf_{\gamma} \left\{ L(\gamma) \right\},
\end{equation}
where the infimum is taken over twice differentiable paths $\gamma$ with endpoints $(\varphi \circ \gamma) (0) = p$ and  $(\varphi \circ \gamma) (1) = q$.

\section{Proof of Theorem 1}\label{appendix:proof}

\begin{proof}
    To establish the left bound, consider the path function $\gamma:[0,1] \to U$ be a $C^{1}$ curve satisfying $(\varphi \circ \gamma)(0)=p$ and $(\varphi \circ \gamma)(1)=q$ [cf. Apdx. B]. Then, by the Fundamental Theorem of Calculus \cite{rudin1976principles} and the triangle inequality,

    \begin{align*}
        \|p-q\|_{2} &= \left \| \int_0^1 (\varphi \circ \gamma)'(t)dt \right\|_{2} \\
        &\le \int_0^1\left \| (\varphi \circ \gamma)'(t) \right\|_{2} dt  = L(\gamma).
    \end{align*}
    Since this holds for every such $\gamma$, taking the infimum over all $C^{1}$ curves that satisfy $(\varphi \circ \gamma)(0) = p$ and $(\varphi \circ \gamma)(1) = q$, gives 
    \begin{equation}
        \|p-q\| \leq \text{inf}_{\gamma}\{ L(\gamma)\} = d_{\mathcal{M}}(p,q),
    \end{equation}
where the last equality is from the definition provided in Eq. \eqref{eqn:dist_metric}.

To establish the right bound, we will reparameterize the path function as a function of the arc (path) length. Specifically,  let $\Gamma: [0, \ell] \to \mathcal{M}$ be the minimizing unit-speed geodesic joining $p$ to $q$, where $\ell = d_{\mathcal{M}}(p,q)$ parameterized with respect to arc length $s\in[0,\ell]$, i.e., $s(0)=0$ and $s(1)=\ell$. Consequently, $\|\Gamma'(s)\| = 1$ for all $0 \leq s \leq \ell$. 

Since $\Gamma$ is parameterized by arc length, the Frenet-Serret formulas \cite{doCarmo1976} imply, $\|\Gamma''(s)\| = |\kappa(s)| \leq K$, as assumed in Thm. 1. Writing $\Gamma'(s) = \Gamma'(0) + \int_0^s \Gamma''(\sigma)\, d\sigma$ and integrating,
\begin{align}
    \|p - q\| &= \left\|\int_0^\ell \Gamma'(t)\, dt\right\| \\
    &= \left\|\ell\, \Gamma'(0) + \int_0^\ell \int_0^s \Gamma''(\sigma)\, d\sigma\, ds\right\| \\
    &\geq \ell\|\Gamma'(0)\| - \int_0^\ell \int_0^s \|\Gamma''(\sigma)\|\, d\sigma\, ds \\
    &\geq \ell - K\int_0^\ell s\, ds = \ell - \frac{K\ell^2}{2} \\
    &= \ell\left(1 - \frac{K\ell}{2}\right).
\end{align}
Setting $r = 1/K$, and under the assumption $d_{\mathcal{M}}(p,q) < r$ which is reasonable since the velocity is bounded and the path length $d_{\mathcal{M}}(p,q)$ can be made arbitrarily small for traveling within one unit of time. Thus, we have $1 - K\ell/2 > 1/2$, and so
\begin{equation}
    \|p - q\| \geq \frac{d_{\mathcal{M}}(p,q)}{2},
\end{equation}
which rearranges to $d_{\mathcal{M}}(p,q) \leq 2\|p-q\|$.

Therefore, taking $r = 1/K$ and $C = 2$, both inequalities hold for all $p, q \in \mathcal{M}$ with $d_{\mathcal{M}}(p,q) < r$.\end{proof}



\bibliographystyle{IEEEtran}
\bibliography{refernces}

\end{document}